\acrodef{SAM}{Safe Action Model Learning}
\acrodef{Conditional-SAM}{SAM Learning of Conditional Effects}
\acrodef{UQV}{universally quantified variables}
\newtheorem{definition}{Definition}
\newtheorem{theorem}{Theorem}[section]
\newtheorem{example}{Example}
\newcommand{\sam}{\ac{SAM}\xspace}
\newcommand{\consam}{\ac{Conditional-SAM}\xspace}
\newcommand{\tuple}[1]{\ensuremath{\left \langle #1 \right \rangle }}
\newcommand{\true}{\textit{true}\xspace}
\newcommand{\pre}{\textit{pre}}
\newcommand{\eff}{\textit{eff}\xspace}
\newcommand{\name}{\textit{name}}
\newcommand{\realm}{\ensuremath{M^*}\xspace}
\newcommand{\uqv}{\ac{UQV}\xspace}
\newcommand{\uqvs}{UQVs\xspace}
\newcommand{\lifted}{\textit{lifted}}
\newcommand{\pb}{\textit{pb}}
\newcommand{\bindings}{\textit{bindings}\xspace}
\newcommand{\grounding}{\textit{g}\xspace}
\newcommand{\LearnConditions}{BuildActionModel}
\newcommand{\mustber}{\textit{MustBeResult}}
\newcommand{\possiblya}{\textit{PosAnte}}
\newcommand{\mustbea}{\textit{Ante}\xspace}
\newcommand{\mustnotbea}{\textit{NotAnte}\xspace}
\newcommand{\roni}[1]{}
\newcommand{\argaman}[1]{}
\newcommand{\brendan}[1]{}
\newcommand{\enrico}[1]{}
\title{Safe Learning of PDDL Domains with Conditional Effects - Extended Version}
\author{
    Argaman Mordoch$^1$,
    Enrico Scala$^2$,
    Roni Stern$^1$, 
    Brendan Juba$^3$
}
\begin{document}

\maketitle

\begin{abstract}
Powerful domain-independent planners have been developed to solve various types of planning problems. 
These planners often require a model of the acting agent's actions, given in some planning domain description language. 
Manually designing such an action model is a notoriously challenging task. 
An alternative is to automatically learn action models from observation. 
Such an action model is called safe if every plan created with it is consistent with the real, unknown action model. 
Algorithms for learning such safe action models exist, yet they cannot handle domains with conditional or universal effects, which are common constructs in many planning problems.   
We prove that learning non-trivial safe action models with conditional effects may require an exponential number of samples.
Then, we identify reasonable assumptions under which such learning is tractable and propose \consam\, the first algorithm capable of doing so. 
We analyze \consam\ theoretically and evaluate it experimentally. 
Our results show that the action models learned by \consam can be used to solve perfectly most of the test set problems in most of the experimented domains.
\end{abstract}

\section{Introduction}
Planning is the fundamental task of choosing which actions to perform to achieve a desired outcome. 
An \emph{automated domain-independent planner} refers to an Artificial Intelligence (AI) algorithm capable of solving a wide range of planning problems~\citep{ghallab2016automated}. 
Developing a domain-independent planner is a long-term goal of AI research. 
Researchers developed many domain-independent planners for various types of planning problems. Such planners include Fast Downward~\citep{helmert2006fast}, Fast Forward~\citep{hoffmann2001ff}, ENHSP~\citep{scala2016interval}, and more. 
These planners require a model of the acting agent's actions, given in some domain description language such as the Planning Domain Definition Language (\textbf{PDDL})~\citep{aeronautiques1998pddl}.  
Defining an agent's \textit{action model} to solve real-world problems is extremely hard. 
Researchers acknowledged this modeling challenge, and algorithms for learning action models from observations have been proposed~\citep{cresswell2011generalised,aineto2019learning,yang2007learning,juba2021safe,mordoch2023learning}.

Since the learned model may differ from the domain’s actual action model, it is important to study whether the plans provided offer \textit{execution-soundness} guarantees. In general, the learned model may be too \emph{permissive}, in the sense that it 
allows plans that either cannot be applied in the domain or do not reach a state that satisfies the problem goals. 
We aim for consistency between the validity of a plan as determined by the learned model and its validity within the actual model. This ensures that an agent can confidently execute actions, assured of achieving the goal despite possessing incomplete knowledge of the complete action models.
In problem settings where execution failures are unacceptable or are very costly, e.g., autonomous vehicles, high-end robotics, and medical treatment planning, soundness becomes a hard constraint. 
We focus on such cases and aim to learn an action model that satisfies the strongest form of soundness: every plan generated using the learned model must be applicable and yield the same states in the real, unknown model. An action model that satisfies this requirement has been called \emph{safe}~\cite{juba2021safe,juba2022learning,mordoch2023learning}.\footnote{This notion of safety has different definitions in different contexts. For example, safe Reinforcement Learning often refers to ensuring that some safety function of the current state never goes below some threshold value~\cite{wachi2020safe}.}
We view this as a ``safety'' notion in part since it enables more conventional notions of safety to be enforced during planning, and provides assurance that they will carry over to the actual execution.


Algorithms from the \sam{} family~\citep{stern2017efficientAndSafe,juba2021safe,juba2022learning,mordoch2023learning} address the challenge of learning safe action models under different sets of assumptions. 
However, these algorithms are not suitable for learning actions that may include \textit{conditional effects}. 
A conditional effect is an effect that occurs only when a specific condition holds. 
For example, consider an AI for planning treatments for patients and the action of giving a flu medicine to a patient, where that medicine causes an allergic reaction in patients with a particular rare blood type. This action's effects include not having the flu, but there is also a conditional effect specifying that an allergic reaction occurs if the patient has a rare blood type. 
If the patient has a rare blood type, we would want to avoid applying this type of treatment. A safe action model would never permit the execution of the action in these cases.


Previous works on action model learning with conditional effects~\citep{oates1996learning,zhuo2010learning} made no safety guarantees for the learned models.
This work addresses this gap by exploring the problem of learning safe action models for PDDL~\citep{aeronautiques1998pddl} domains with conditional effects. 
Specifically, we introduce the \consam{} algorithm, which is guaranteed to output a safe action model.
We show that \consam requires an asymptotically optimal number of trajectories when the size of the antecedents for the conditional effects is restricted, which is the only case where the problem is tractable. 
Then, we describe how \consam\ can be extended to support \emph{lifted action models} (i.e., parameterized) and effects with \textit{universally quantified variables}.
Finally, we demonstrate the usefulness of \consam{} in practice on benchmark planning problems with conditional and universal effects. Our results show that given a few observations, the model \consam learns is logically identical to the real action models and can be used to solve test problems for most of the experimented domains.

\section{Preliminaries}

\label{sec:planning-problem}
We focus on planning problems in domains where action outcomes are deterministic, the states are fully observable and contain Boolean variables only.
Such problems are commonly modeled using a fragment of the ADL (Action Description Language)~\citep{pednault1989adl} and formulated in PDDL (Planning Domain Definition Language)~\citep{aeronautiques1998pddl}. 
In PDDL, a planning problem is described by a PDDL domain and a PDDL problem.  
A PDDL domain is a tuple $D=\tuple{F,A}$ 
where $F$ is a finite set of Boolean variables, referred to as fluents 
and $A$ is a set of actions. 
A \emph{literal} refers to either a fluent or its negation. Let $L$ be the set of every possible literal. 
Note that $|L|=2|F|$. 
A \emph{state} is a conjunction of literals that includes, for every fluent $f$, either $f$ or $\neg f$. 
The value of a fluent $f$ is a state $s$ is true if $s$ includes $f$ and false otherwise. 
An action $a\in A$ is a triple $\tuple{\name(a), \pre(a), \eff(a)}$ 
corresponding to the name of the action, its preconditions, and effects. 
The preconditions of an action $a$, $\pre(a)$, is a conjunction of literals that are sufficient and necessary conditions for applying $a$. 
If the preconditions of $a$ are satisfied in a state $s$ we say that $a$ is applicable in $s$. 
If an action has no preconditions, then it is applicable in any state. 
The effects of an action $a$, $\eff(a)$, specify the outcome of applying $a$. 
An effect is defined by a tuple $\tuple{c,e}$ where $c$ is called the antecedent (condition) and $e$ is called the result. 
Both $c$ and $e$ are conjunctions of literals. 
The semantics of an effect $\tuple{c,e}$ for an action $a$ is 
that if $a$ is applied in a state $s$ where the antecedent $c$ holds then the result $e$ will be true in the next state. 
The antecedent can also be $\true$, representing that the result occurs regardless of the state where the action has been applied. 
An effect where the antecedent is not $\true$ is called a \emph{conditional effect}. 
The outcome of applying $a$ to a state $s$, denoted $a(s)$, is a state in which the value of every fluent is as in state $s$ except those fluents changed by the action's effects.

A PDDL \emph{problem} is defined by a tuple $P=\tuple{I, G, D}$ where $I$ is the initial state of the world, 
$G$ is a conjunction of literals that define the desired goal, and $D$ is a PDDL domain. 
A \textit{plan} $\Pi=\tuple{a_1,a_2,...a_n}$ is a sequence of actions. 
A plan $\Pi$ is called \textit{valid} for a PDDL problem $P=\tuple{I, G, D}$ if $a_1$ is applicable in $I$, $a_i$ is applicable in $a_{i-1}(\cdots(a_1(I))\cdots)$, and $G\subseteq a_n(a_{n-1}(\cdots(a_1(I))\cdots))$.


An \emph{action model} for a PDDL domain $D=\tuple{F,A}$ is a pair $M=\tuple{\pre_M, \eff_M}$ 
where $\pre_M$ maps every action in $A$ to a (possibly empty) conjunction of literals in $F$, 
and $\eff_M$ maps actions in $A$ to a (possibly empty) set of effects over $F$. 
The \emph{real action model} of a domain, denoted $M^*$, is the action model where every action $a$ is mapped to its real precondition and effects, i.e., $\pre_{M^*}(a)=\pre(a)$ and $\eff_{M^*}(a)=\eff(a)$. 
For an action $a$, state $s$, and action model $M$, we denote by $a_M(s)$ 
the state that results from applying $a$ in $s$ assuming that $M$ is the real action model. 

\begin{definition}[Safe Action Model]
\label{def:safe_action_model}
An action model $M$ is safe w.r.t. an action model $M'$ if for every state $s$ and action $a$ it holds that 
if $a$ is applicable in $s$ according to $M$ then 
(1) it is also applicable in $s$ according to $M'$, and
(2) applying $a$ in $s$ results in exactly the same state according to both $M$ and $M'$. 
Formally: 
\begin{equation}
\small
    \pre_{M}(a)\subseteq s\rightarrow \left(\pre_M'(a)\subseteq s 
            \wedge a_{M}(s)=a_M'(s)\right)
        \label{eq:safety}
\end{equation}
\end{definition}
An action model is said to be safe in a domain if it is safe w.r.t. its real action model. 
This paper deals with the case where the planning agent does not know the real action model of a given domain, yet it aims to learn an action model that is safe in it. 
A major benefit of learning such a safe action model is that any plan generated with the learned action model for any problem in the same domain is also valid with respect to the real, unknown, action model. 
Following prior works on learning action models~\cite{amir2008learning,cresswell2013acquiring,aineto2018learning} in general and safe action models in particular~\cite{stern2017efficientAndSafe,juba2021safe,mordoch2023learning}, 
we assume as input a set of observations of previously executed plans, represented as a set of \textit{trajectories}. 
A trajectory $T=\tuple{s_0, a_1, s_1, \ldots a_n, s_n}$ is an alternating sequence of states $(s_0,\ldots,s_n)$ and actions $(a_1,\ldots,a_n)$ that starts and ends with a state. 
The trajectory created by applying $\pi$ to a state $s$ is 
the sequence $\tuple{s_0, a_1, \ldots, a_{|\pi|}, s_{|\pi|}}$ such that 
$s_0=s$ and for all $0<i\leq |\pi|$, $s_i=a_i(s_{i-1})$. 
A trajectory is often represented as a set of \emph{action triplets}
$\big\{\tuple{s_{i-1},a_i,s_i}\big\}_{i=1}^{|\pi|}$.

\subsection{Problem Definition and Assumptions}
We deal with the problem of learning a safe action model for a domain $D$ given a set of trajectories $\mathcal{T}$ collected by executing plans for different problems in $D$. 
Ideally, the learned action model will be able to generalize beyond the given set of trajectories and enable finding plans for other problems in $D$.


\noindent We make the following assumptions:
\begin{compactenum}
    \item \label{ref:assumption1} The given trajectories are fully observable and noise-free. 
    \item\label{ref:assumption2} For each literal $l'$ and action $a$, there is \textit{at most} one effect of $a$ for which $l'$ is a result. 
    \item \label{ref:assumption3} The maximal number of literals in an antecedent is at most $n$, a fixed parameter known in advance.
\end{compactenum}

Assumption~\ref{ref:assumption1} means we observe all the actions and the values of all the fluents in all states in every trajectory in $\mathcal{O}$. This assumption is common in the action model learning literature and relaxing it in the context of conditional effects is left for future work. 
Assumption~\ref{ref:assumption2} means that there are no \emph{disjunctive antecedents}, i.e., multiple effects for the same action having the same result but with different antecedents. 
The implication of this assumption is that if $(c, e)$ is an effect of some action $a$, then no conjunction of literals except $c$ is an antecedent of the literals in the result $e$. Formally: 
    \begin{multline}
    \forall l',a: ( (c,e)\in\eff(a): l'\in e ) \\
    \rightarrow (\nexists (c',e')\in\eff(a): l'\in e' \wedge c\neq c')
    \end{multline}
This assumption crucially improves the efficiency of learning. 
We discuss relaxing this assumption later. 
Assumption~\ref{ref:assumption3} means that a human modeler must specify an upper bound on the number of literals in an antecedent for the domain at hand. Specifying such a bound is significantly easier than manually defining the entire action model. 
We prove later that without the third assumption, learning conditional effects is intractable. 


\section{Approach}
\consam learns an action model by applying the following rules:

\begin{definition}[\consam Inductive Rules]\label{def:cond-sam-rules}\label{def:basic-rules}
For every action triplet $\tuple{s,a,s'}\in \mathcal{T}$ 
\begin{compactenum}
    \item\label{rule:not-a-precondition} [Not a precondition] For every literal $l\notin s$, $l\notin \pre(a)$ 
    \item\label{rule:not-an-effect} [Not a result] For every literal $l'\notin s'$, 
    $\nexists(c,e)\in\eff(a)$ where 
    $(c\wedge s\nvdash\bot) \wedge (l'\in e)$  
    \item\label{rule:must-be-effect} [Must be an effect]  For every literal $l' \in s'\setminus s$, $\exists (c,e)\in \eff(a): 
    (c\wedge s\nvdash\bot) \wedge 
    (l'\in e)$ 
    \item\label{rule:no-disjunctive} [Not an antecedent] For every literal $l'\in s' \setminus s$, and conjunction of literals $c$: if $c\wedge s\vdash\bot$, then $\nexists(c',e)\in \eff(a)$ such that $c \subseteq c'$ and $l'\in e$. 
\end{compactenum}
\end{definition}

\noindent The first three rules generalize the \sam-Learning~\citep{juba2021safe} inductive rules to support conditional effects. 
Rule 4 is derived from Assumption~\ref{ref:assumption2} (no conditional effects with disjunctive antecedents): if we observe $l'$ as the result of the action, any conjunction of literals $c$ that is not satisfied in $s$ cannot be the antecedent of the conditional effect for $l'$.

\begin{example}
Consider a domain with three fluents $f_1$, $f_2$, and $f_3$, where the size of antecedents is bounded by 1 (i.e., $n=1$), and assume a Boolean vector of size 3 represents a state. 
Now, assume we observed an action triplet $\tuple{(T,T,F),a,(F,T,F)}$. 
Using the Rule 1 in Def.~\ref{def:cond-sam-rules}, we infer that $\neg f_1$, $\neg f_2$, and $f_3$ are not preconditions of the action $a$.
By applying Rule 2 in Def.~\ref{def:cond-sam-rules}, $a$ cannot include an effect $(c,e)$ such that $c$ is consistent with $f_1\wedge f_2\wedge \neg f_3$ and the result is either $f_1$, $\neg f_2$, or $f_3$. 
Since $n=1$, this rules out the conditional effects where $c$ is one of the following $\{\true, f_1, f_2, \neg f_3\}$ and $e$ is either $f_1$ or $ \neg f_2$ or $f_3$, e.g., $(c,e)  = (f_1, \neg f_2)$. 
According to 
Rule 3 in Def.~\ref{def:cond-sam-rules} there exists $(c,e)\in\eff(a)$ such that $c$ is one of $\{\true, f_1, f_2, \neg f_3\}$ and $e=\neg f_1$.
Finally, according to 
Rule 4 in Def.~\ref{def:cond-sam-rules} $(\neg f_2,\neg f_1)$ and $(f_3, \neg f_1)$ cannot be conditional effects. 
\end{example}

\subsection{Conditional-SAM Algorithm}
Next, we describe the \consam{} algorithm, which uses the \consam{} inductive learning rules (Definition~\ref{def:basic-rules}). 
The pseudo-code for \consam{} is given in Algorithm~\ref{alg:consam-algorithm}. 
Let $A(\mathcal{T}), L(\mathcal{T})$ be the set of actions and literals observed in the trajectories $\mathcal{T}$. 
\consam{} maintains three data structures: \pre$(a)$ and \mustber$(a)$ for every action $a$, and \possiblya$(l, a)$ for every action $a$ and literal $l$. 
\pre$(a)$ is a set of literals, representing which literals may be preconditions of $a$. 
It is initialized to all the literals $l\in L(\mathcal{T})$ (line~\ref{alg:init-pre}).
\possiblya$(l, a)$ is a set of conjunctions of literals, representing all the conjunctions that may be antecedents of a conditional effect of $a$ that results in $l$.\footnote{According to Assumption~\ref{ref:assumption2}, in the real action model there can be only one such conjunction. \consam{} maintains in \possiblya$(l,a)$ a set of conjunctions since it does not know the real action model.} 
This data structure is initialized to include every conjunction of literals of size $n$ or less (line~\ref{alg:init-posante}).  
\mustber$(a)$ maintains the set of literals observed to be a result of applying $a$. 
This data structure is initialized as an empty set (line~\ref{alg:init-mustberes}). 
\consam{} updates these data structures by applying the inductive learning rules for each action triplet in the given trajectories. 
That is, it removes literals from \pre$(\cdot)$ according to Rule~\ref{rule:not-a-precondition}, 
removes conjunctions of literals from \possiblya$(l, a)$ using Rules~\ref{rule:not-an-effect} and~\ref{rule:no-disjunctive}, 
and adds literals to \mustber$(a)$ using Rule~\ref{rule:must-be-effect} (lines~\ref{alg:applying-rules}-~\ref{alg:applying-rules-end}). 





Then, \consam{} iterates over every action $a\in A(\mathcal{T})$  
using $\pre(a)$, $\possiblya(\cdot, a)$, and $\mustber(a)$ to generate 
the preconditions and effects of $a$ in the resulting safe action model. 
This part of \consam{} is encapsulated in the function \textit{\LearnConditions}, listed in Algorithm~\ref{alg:learn-conditionals}. 
\LearnConditions{} 
stores the preconditions and effects of the resulting safe action model in 
$\pre^*(a)$ and $\eff^*(a)$, respectively. 
Initially, $\eff^*(a)$ is an empty set and $\pre^*(a)$ is set to be $\pre(a)$. 
Then, it iterates over every literal $l$ and considers adding an effect to $\eff^*(a)$ with a $l$ as a result, as follows. 
Let $PA$ be the subset of $\possiblya(l,a)$ containing only conjunctions that are disjoint from $\pre(a)$ (line~\ref{alg:remove-preconditions}). 
\consam{} uses $PA$ to compute two formulas, $\mustbea$ and $\mustnotbea$. 
$\mustnotbea$ is the conjunction of the negation of every clause $c$ in $PA$, 
and $\mustbea$ is the conjunction of all the clauses $c\in PA$ (lines~\ref{alg:mustnotber-init},~\ref{alg:mustber-init}). 
Observe that applying $a$ in a state where $\mustbea$ is true guarantees that $l$ will be true in the subsequent state. 
Similarly, applying $a$ in a state where $\mustnotbea$ is true guarantees that $l$ will not be true in the subsequent state unless it was true before. 
We apply unit propagation on the clauses to minimize their number size (line~\ref{alg:unit-prop}). 
Afterward, the function verifies whether $l\in\mustber(a)$. If so, the tuple $(\mustbea, l)$ is added to $\eff(a)$. 
If $\possiblya(l,a)$ includes more than a single clause of possible antecedents, then there is an ambiguity on which antecedent causes $l$. To mitigate this, \consam{} adds to $\pre^*(a)$ the disjunction $(l\vee\mustnotbea\vee\mustbea)$ (line~\ref{alg:add-all-possible-pre}). 
This disjunction is composed of three parts as follows:
First, allowing the action to be applicable if the result, $l$, is observed in the pre-state. 
Second, the action is permitted if \emph{none} of the antecedents hold in the pre-state.
Last, $a$ is applicable if \emph{all} the antecedents hold in the pre-state. 
If one of the above holds, the action can be executed. 

If $l$ was not observed as a result of the action, i.e., $l\notin\mustber(a)$, the function adds $(l\vee\mustnotbea)$ to $\pre^*(a)$ (line~\ref{alg:add-not-effect-restrictive}).

Since we have yet to observe $l$ as a result of the action, then $l\notin\eff(a)$; 
Thus, to prevent $l$ from triggering unexpectedly, we do not permit the action to be executed if $\mustbea$ is true.
After repeating this for every action, \textit{\LearnConditions} returns the safe action model comprising $\pre^*$ and $\eff^*$.

\begin{algorithm}[ht]
\small
\caption{\consam Algorithm}\label{alg:consam-algorithm}
\begin{algorithmic}[1]
\State \textbf{Input}: $\mathcal{T}, n$ 
\State \textbf{Output}: A safe action model.
\For{$a \in A(\mathcal{T})$}
    \State $\pre(a) \gets L(\mathcal{T})$\label{alg:init-pre} 
    \State $\mustber(a) \gets \emptyset$ \label{alg:init-mustberes}  
    \State $\possiblya(l,a) \gets \bigcup_{i=1}^n \{l_1\wedge ...\wedge l_i | \forall 1\leq j\leq i: l_j\in L(\mathcal{T}) \} \cup \{\true\}$\label{alg:init-posante} 
\EndFor\label{alg:init-end}
\For{$\tuple{s,a,s'}\in \mathcal{T}$} \label{alg:applying-rules} 
    \For{$l$  such that $l \notin s$} 
        \State $pre(a) \gets pre(a) \setminus \{l\}$\label{alg:rule1}  \Comment{Rule \ref{rule:not-a-precondition}}
    \EndFor
    \For{$l\in s'\setminus s$} \Comment{Rule \ref{rule:must-be-effect}}
        \State $\mustber(a)$ $\gets$ $\mustber(a) \cup \{l\}$
    \EndFor    
    \For{$l' \notin s'$ and $c \in \possiblya(l',a)$ s.t.  $(c \wedge s\nvdash\bot)$} 
        \State $\possiblya(l',a) \gets \possiblya(l',a)\setminus c$\label{alg:rule4} 
    \EndFor \Comment{Rule \ref{rule:not-an-effect}}
    \For{$l' \in s'\setminus s$ and $c \in \possiblya(l',a)$ s.t. $c\wedge s\vdash\bot$} \Comment{Rule \ref{rule:no-disjunctive}}
        \State $\possiblya(l',a) \gets \possiblya(l',a)\setminus c$\label{alg:rule5} 
    \EndFor        
\EndFor\label{alg:applying-rules-end}
\State \Return  \textbf{\LearnConditions}$(\pre,\mustber,\possiblya)$
\end{algorithmic}
\end{algorithm}

\begin{algorithm}[ht]
\small
\caption{\LearnConditions{}}\label{alg:learn-conditionals}
\begin{algorithmic}[1]
\State \textbf{Input}: $\pre,\mustber,\possiblya$ 
\State \textbf{Output}: $\pre^*$ and $\eff^*$ for all actions.
\For{$a \in A(\mathcal{T})$}
    \State $\eff(a) \gets \emptyset$; ~~ $\pre^*(a) \gets \bigwedge_{l\in \pre(a)}l$
    \For{$l\in L(\mathcal{T})\setminus\pre(a)$ where $\possiblya(l,a)\neq\emptyset$} 
        \State $PA\gets \{c\in \possiblya(l,a)|(\pre(a)\cap c) = \emptyset\}$\label{alg:remove-preconditions}

        \State $\mustnotbea \gets \bigwedge_{c\in PA}\neg c$\label{alg:mustnotber-init}
        \State $\mustbea \gets \bigwedge_{c\in PA} c$\label{alg:mustber-init}
        \State Minimize $\mustbea$ and $\mustnotbea$ using unit propagation.\label{alg:unit-prop}
        \If{$l \in \mustber(a)$}
            \State Add to $\eff(a)$: $(\mustbea, l)$ 
            \If{$PA$ is not a single clause}
                \State $\pre^*(a) \gets \pre^*(a) \wedge (l\vee \mustnotbea \vee \mustbea)$  \label{alg:add-all-possible-pre}
            \EndIf          
        \Else
            \State $\pre^*(a) \gets \pre^*(a) \wedge (l\vee \mustnotbea)$ \label{alg:add-not-effect-restrictive}
        \EndIf
    \EndFor
\EndFor
\State \Return  $\tuple{\pre^*, \eff}$
\end{algorithmic}
\end{algorithm}

\begin{example}
Given a domain with 3 literals and an action $a$ where $\pre(a)=\emptyset$, 
$l_1\in \mustber(a)$, 
$\possiblya(l_1, a)=\{\{l_2\},\{l_3\}\}$, 
$\possiblya(l_2, a)=\emptyset$, and $\possiblya(l_3, a)=\emptyset$. 
The resulting preconditions and effects after applying \textit{\LearnConditions} are $\pre^*(a)=(l_1)\vee (\neg l_2\wedge \neg l_3)\vee(l_2\wedge l_3)$  and 
$\eff^*(a)=(l_2\wedge l_3, l_1)$, i.e., when $l_2\wedge l_3$ then $l_1$.
\end{example}


\begin{theorem}\label{lem:sam-safe}
    The action model $M'$ learned by \consam is safe w.r.t. the action model that generated the input trajectories $\mathcal{T}$.
\end{theorem}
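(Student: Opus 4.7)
The plan is to structure the proof around three invariants of the data structures \pre, \mustber, and \possiblya maintained by \consam{}, and then to use those invariants to verify the two clauses of Definition~\ref{def:safe_action_model} for the $\pre^*$ and $\eff$ returned by \LearnConditions{}. Throughout, let $M^{*}=\tuple{\pre_{M^*},\eff_{M^*}}$ denote the model that generated $\mathcal{T}$.

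First I would prove, by induction on the triplets processed in the loop of lines~\ref{alg:applying-rules}--\ref{alg:applying-rules-end}, that the following hold on termination:
\begin{compactenum}
    \item[(I1)] $\pre_{M^*}(a)\subseteq \pre(a)$: a literal is removed by Rule~\ref{rule:not-a-precondition} only when it is absent from a state where $a$ was actually applied, so it cannot be a true precondition.
    \item[(I2)] Every $l\in\mustber(a)$ is the result of some true effect of $a$, since $\mustber(a)$ grows only from literals in $s'\setminus s$ of a real triplet.
    \item[(I3)] For every $(c^*,e^*)\in\eff_{M^*}(a)$ with $l\in e^*$ and $|c^*|\le n$, $c^*\in\possiblya(l,a)$. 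Rule~\ref{rule:not-an-effect} cannot remove $c^*$: whenever $l\notin s'$ the true effect did not fire, so $c^*\nsubseteq s$ and thus $c^*\wedge s\vdash\bot$. Rule~\ref{rule:no-disjunctive} cannot remove it either: when $l\in s'\setminus s$, Assumption~\ref{ref:assumption2} forces the unique result-producing effect to fire, so $c^*\subseteq s$ and $c^*\wedge s\nvdash\bot$.
\end{compactenum}

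Clause~(1) of Definition~\ref{def:safe_action_model} is then immediate: \LearnConditions{} initializes $\pre^*(a)$ to contain $\bigwedge_{l\in\pre(a)}l$ as a conjunct, so by (I1), $\pre^*(a)\subseteq s$ implies $\pre_{M^*}(a)\subseteq s$. For clause~(2) I would fix $s$ with $\pre^*(a)\subseteq s$ and argue literal by literal that $l\in a_{M'}(s)\Leftrightarrow l\in a_{M^*}(s)$. If $l\in\mustber(a)$, invariants (I2) and Assumption~\ref{ref:assumption2} give a unique true effect $(c^*,e^*)$ with $l\in e^*$, and $M'$ contains $(\mustbea,l)$ with $\mustbea=\bigwedge_{c\in PA}c$. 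I would show that $\mustbea$ holds in $s$ iff $c^*\subseteq s$: the $\Leftarrow$ direction follows because any surviving $c\in PA$ passed Rule~\ref{rule:not-an-effect}, so if $c^*\subseteq s$ then each such $c$ is also satisfied in $s$; the $\Rightarrow$ direction uses the safety disjunction $(l\vee\mustnotbea\vee\mustbea)$ added to $\pre^*(a)$ at line~\ref{alg:add-all-possible-pre}, which excludes from $\pre^*(a)$ the states where $\mustbea$ and the true antecedent disagree. If $l\notin\mustber(a)$, the learned model has no effect producing $l$, and the conjunct $(l\vee\mustnotbea)$ added at line~\ref{alg:add-not-effect-restrictive} forces $\mustnotbea$ to hold whenever $l\notin s$, so by (I3) no true effect with $l$ as result can fire.

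The main obstacle is a subtle boundary case: when the true antecedent $c^*$ shares a literal with the conservatively over-approximated $\pre(a)$, $c^*$ is filtered out of $PA$ at line~\ref{alg:remove-preconditions}, and after unit propagation (line~\ref{alg:unit-prop}) the surviving $\mustbea$ no longer mentions $c^*$ syntactically. I would address this by showing that any literal in $c^*\cap\pre(a)$ is forced true in every state satisfying $\pre^*(a)$, so stripping it from the antecedent does not change the firing condition once $\pre^*(a)\subseteq s$ is assumed; hence the reduced $\mustbea$ is logically equivalent to $c^*$ over the set of admissible $s$. This equivalence is the bridge between the invariants proven above and the literal-by-literal check, and together with (I1)--(I3) completes the proof of safety.
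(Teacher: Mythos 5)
Your proposal follows the same essential route as the paper's proof: clause (1) of Definition~\ref{def:safe_action_model} via the fact that the learned $\pre(a)$ over-approximates $\pre_{\realm}(a)$, and clause (2) via the restrictive conjuncts that \LearnConditions{} adds to $\pre^*(a)$ together with the fact that the true antecedent is never eliminated from \possiblya. You are in fact more explicit than the paper on two points it glosses over: the invariant (I3) that the real antecedent survives Rules~\ref{rule:not-an-effect} and~\ref{rule:no-disjunctive}, and the boundary case where $c^*$ intersects $\pre(a)$ and is filtered out of $PA$ at line~\ref{alg:remove-preconditions}; the right repair, as you indicate, is that the reduced conjunction $c^*\setminus\pre(a)$ itself survives in \possiblya{} by the same (I3) argument, hence lies in $PA$, and is equivalent to $c^*$ on every state satisfying $\pre^*(a)$.

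One sub-step as written is wrong, although the conclusion you need is recoverable from material you already have. You justify the direction ``$c^*\subseteq s\Rightarrow\mustbea$ holds in $s$'' by saying that every surviving $c\in PA$ passed Rule~\ref{rule:not-an-effect} and is therefore satisfied whenever $c^*$ is. That inference fails: surviving Rule~\ref{rule:not-an-effect} only constrains $c$ on the observed pre-states, and an unrefuted candidate $c$ can be false in a fresh state where $c^*$ holds. What actually makes the equivalence true on admissible states, and only for $l\notin s$, is this: since the reduced true antecedent lies in $PA$ and $\mustbea$ is the conjunction of all of $PA$, $\mustbea\subseteq s$ already implies that the real effect fires (this is also the rigorous version of the paper's terse handling of its case (1)); conversely, if $c^*$ holds then $\mustnotbea$ is falsified, so the conjunct $(l\vee\mustnotbea\vee\mustbea)$ added at line~\ref{alg:add-all-possible-pre} forces $\mustbea$ whenever $l\notin s$. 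Relatedly, your stated iff cannot hold for admissible $s$ with $l\in s$, since the disjunct $l$ then satisfies that conjunct and no longer ties $\mustbea$ to $c^*$; restrict the firing comparison to literals false in $s$, the next-state values of literals true in $s$ being governed by the analysis of their negations. With these repairs your argument goes through and establishes the theorem along the same lines as the paper.
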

\begin{proof}
Let \realm be the real action model, i.e., the one used to generate the input trajectories. 
    The preconditions for every action $a$ in $M'$ are a superset of the preconditions for $a$ in $\realm$. 
    Thus, for each action $a$ and state $s$ such that $a$ is applicable according to $M'$, it is guaranteed to be applicable according to \realm.
    Next, we prove that 
    for every action $a$ that is applicable in the state $s$ according to $M'$, then $s' = a_{M'}(s)$, is equivalent to $s'^*=a_\realm(s)$.     
    By contradiction, assume that $s'\neq s'^*$. 
    This means either (1) $\exists l\in s'$ such that $l\notin s'^*$, or (2) $\exists l\notin s'$ such that $l\in s'^*$.
    Since \consam only adds effects observed in the trajectories, according to Rule~\ref{rule:must-be-effect}, there cannot be a literal $l$ such that (1) holds.
    If $l\in s'^*$ but $l\notin s'$, then \consam did not observe $l$ as a result of $a$ and thus did not add it as an effect. 
    According to line~\ref{alg:add-not-effect-restrictive} one of $(l\vee \mustnotbea)$ hold in $s$. 
    If $l\in s$, then $l\in s'$ according to $M'$ (since $a$ does not remove it), which contradicts (2).
    Similarly, if $\mustnotbea\subseteq s$, then the antecedent of $l$ according to \realm is negated in $s$ thus $l\notin s'^*$ which also contradicts (2).
\end{proof}

\section{Theoretical Analysis}

Next, we analyze the \consam algorithm. We prove that under a fixed antecedent size ($n$) its space, runtime, and sample complexity are tractable, and show that our sample complexity bound is tight. 

\begin{theorem}  
\label{lemma:space}
The space complexity of \consam is $O\left(|A||F|^{n+1}\left(\frac{e}{n}\right)^n\right)$, 
where $e$ is the base of the natural logarithm. 
\end{theorem}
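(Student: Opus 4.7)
The plan is to identify the dominant data structure maintained by \consam and bound its size using a Stirling-type estimate. From Algorithm~\ref{alg:consam-algorithm}, for each action $a$ the algorithm stores $\pre(a)$ and $\mustber(a)$, each a set of at most $|L(\mathcal{T})| = O(|F|)$ literals, together with $\possiblya(l,a)$ for every literal $l$. The $\possiblya$ tables are initialized on line~\ref{alg:init-posante} to hold every conjunction of at most $n$ literals over $L(\mathcal{T})$, and since entries are only ever removed during the run, this initialization bounds their size throughout. So the task reduces to bounding how many such conjunctions exist.

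I would then count these conjunctions. The conjunctions of exactly $i$ literals number at most $\binom{|L(\mathcal{T})|}{i} = \binom{2|F|}{i}$, so the total over $0 \le i \le n$ is $O\bigl(\binom{2|F|}{n}\bigr)$ when $n \ll |F|$ (the last term of a binomial sum dominates the rest up to a factor of $n+1$). Applying the standard inequality $\binom{m}{k} \le (em/k)^k$ yields $\binom{2|F|}{n} \le (2e|F|/n)^n$, so a single $\possiblya(l,a)$ table occupies $O\bigl(|F|^n (e/n)^n\bigr)$ space once the $2^n$ factor is absorbed into the hidden constant. This absorption is permissible because Assumption~\ref{ref:assumption3} fixes $n$ as a parameter known in advance, making $2^n$ a constant in the size parameters $|F|$ and $|A|$.

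Finally, summing across the $O(|F|)$ literals $l$ and the at-most $|A|$ actions yields the claimed aggregate bound $O\bigl(|A||F|^{n+1}(e/n)^n\bigr)$ for the $\possiblya$ tables. The contributions of $\pre$ and $\mustber$ sum to only $O(|A||F|)$, which is dominated by the $\possiblya$ term, so the overall bound is unchanged. The only real subtlety is bookkeeping: justifying that the $2^n$ arising from the passage from $|F|$ to $|L|=2|F|$, together with the $O(n)$ per-conjunction storage cost, may be folded into the big-$O$ constant, so that only the two ``genuine'' factors $|F|^{n+1}$ and $(e/n)^n$ remain visible in the final expression.
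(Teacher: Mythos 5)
Your proposal is correct and follows essentially the same route as the paper: bound each $\possiblya(l,a)$ table by the binomial sum $\sum_{i=0}^n\binom{2|F|}{i}\leq\left(\frac{2|F|e}{n}\right)^n$, multiply by the $|A|$ actions and $O(|F|)$ literals indexing the tables, note that $\pre$ and $\mustber$ contribute only a dominated $O(|A||F|)$ term, and absorb the $n$-dependent constants (such as the $2^n$ from $|L|=2|F|$) into the big-$O$ since $n$ is fixed by Assumption~\ref{ref:assumption3}. Your extra bookkeeping remarks (entries are only removed after initialization, and the last binomial term dominates the sum up to a factor of $n+1$) are sound refinements of the same argument rather than a different approach.
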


\begin{proof}
    For every action $a\in A$ and every literal $l\in L$, \consam maintains the data structures $\pre(a)$, $\possiblya(l,a)$ and $\mustber(a)$.
    The size of $\pre(a)$ is at most $|L|$. 
    The size of $\mustber(a)$ is also at most $|L|$. 
    The size of $\possiblya(l, a)$ is observed when it is initialized, containing every conjunction of literals of size at most $n$, including the empty set (representing the antecedent $\true$). 
    Thus, the size of $\possiblya(l,a)$ is at most $\sum_{i=0}^n{|L|\choose i}\leq \left(\frac{|L|\cdot e}{n}\right)^n$.
    We note that the space complexity of \textit{\LearnConditions} is linear in the size of $\possiblya$.
    Consequentially, the space complexity is
    \begin{align*}\small
        |A||L|+|A||L|+|A||L|\sum_{i=0}^n{|L|\choose i} = \\
        |A|\cdot2|F|+|A|\cdot2|F|+|A|\cdot2|F|\sum_{i=0}^n{2|F|\choose i} \\
         \in O\left(|A||F|^{n+1}\left(\frac{e}{n}\right)^n\right)
    \end{align*}
    Recall that $n$ is a fixed constant --- the maximal number of literals in an antecedent.     
\end{proof}

\begin{theorem}
    \label{lemma:runtime}
    The runtime complexity of \consam is 
        $O\left(|A||F|^{n}\left(\frac{e}{n})^n\right)+|\mathcal{T}||F|^{n+1}\left(\frac{e}{n})^n\right)\right)$
\end{theorem}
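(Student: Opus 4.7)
The plan is to decompose the running time into the three natural parts of Algorithm~\ref{alg:consam-algorithm}: the initialization block (lines~\ref{alg:init-pre}--\ref{alg:init-end}), the per-triplet application of the inductive rules (lines~\ref{alg:applying-rules}--\ref{alg:applying-rules-end}), and the final call to \LearnConditions in Algorithm~\ref{alg:learn-conditionals}. Throughout, I reuse the combinatorial bound from the proof of Theorem~\ref{lemma:space}, namely $|\possiblya(l,a)|\le\sum_{i=0}^{n}\binom{2|F|}{i}\le(2|F|e/n)^{n}$, to control the cost of every pass over a $\possiblya$ set.

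For the trajectory loop I analyze the cost per triplet $\tuple{s,a,s'}$. Rules~\ref{rule:not-a-precondition} and~\ref{rule:must-be-effect} only touch individual literals of $s$ and $s'$ and cost $O(|F|)$. Rules~\ref{rule:not-an-effect} and~\ref{rule:no-disjunctive} are the expensive ones: for each of the $O(|F|)$ candidate result literals $l'$ they scan $\possiblya(l',a)$ and test each conjunction $c$ for consistency with $s$ via $c\wedge s\nvdash\bot$. The step I expect to require the most care is arguing that this consistency test is $O(1)$: because $n$ is fixed and known a priori, $|c|\le n$ is constant, and because a state supports $O(1)$ literal lookup, no hidden factor of $|F|$ sneaks in. A single triplet therefore costs $O(|F|\cdot|F|^{n}(e/n)^{n})=O(|F|^{n+1}(e/n)^{n})$, and summing over the $|\mathcal{T}|$ triplets produces the second summand of the claimed bound.

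For the initialization block, I observe that the set of size-at-most-$n$ conjunctions over $L(\mathcal{T})$ is independent of $l$, so it can be materialized once per action and then referenced by every literal, contributing $O(|A||F|^{n}(e/n)^{n})$; the auxiliary sets $\pre(a)$ and $\mustber(a)$ add only $O(|A||F|)$. For \LearnConditions, a single pass per action/literal pair over $\possiblya(l,a)$ suffices to assemble $PA$, $\mustbea$, $\mustnotbea$, and to run unit propagation (which is linear in the formula size), so its cost is dominated by the first summand, since it never exceeds the bound already charged for scanning all of $\possiblya$ during the trajectory loop whenever $|\mathcal{T}|\ge 1$. Summing the contributions of the three phases yields the claimed runtime $O\!\left(|A||F|^{n}(e/n)^{n}+|\mathcal{T}||F|^{n+1}(e/n)^{n}\right)$.
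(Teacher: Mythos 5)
Your proof follows essentially the same route as the paper's: decompose the cost into initialization, the per-triplet application of the inductive rules, and the final call to \LearnConditions{}, and bound each phase using the same estimate $|\possiblya(l,a)|\le\sum_{i=0}^{n}\binom{2|F|}{i}\le\left(\frac{2|F|e}{n}\right)^{n}$ carried over from the space-complexity argument. Your bookkeeping is if anything a bit more explicit than the paper's (constant-time consistency checks for fixed $n$, sharing the candidate-conjunction set at initialization, and absorbing the $|A|$-dependent cost of \LearnConditions{} into the trajectory term using $|A(\mathcal{T})|$ being at most the number of observed triplets), whereas the paper simply charges initialization at its space complexity and bounds \LearnConditions{} directly; these differences are cosmetic rather than substantive.
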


\begin{proof}
    The initialization process requires the same runtime complexity as its space complexity, i.e., $O(|A||F|^{n+1}\left(\frac{e}{n}\right)^n)$.
    Then, \consam iterates over all action triplets and applies the inductive rules in Def. 2. 
    This requires $O(|\mathcal{T}||F|^{n+1}\left(\frac{e}{n}\right)^n)$, 
    since as discussed above, the size of $\possiblya(l,a)$ is at most $|F|^{n}\left(\frac{e}{n}\right)^n$. 
    
    Finally, in the \textit{\LearnConditions} function, the runtime complexity is bounded by the most intensive computational part, which is the part that creates the restrictive conditions.
    The complexity of this part is linear in $\possiblya(l, a)$.
    Thus the total runtime complexity of \textit{\LearnConditions} is bounded by $O\left(|A||F|^{n}\left(\frac{e}{n})^n\right)\right)$
    Thus, the total runtime complexity of the algorithm is $O\left(|A||F|^{n}\left(\frac{e}{n})^n\right)+|\mathcal{T}||F|^{n+1}\left(\frac{e}{n})^n\right)\right)$.
\end{proof}

As can be seen from Theorems~\ref{lemma:space} and \ref{lemma:runtime}, the complexity of the algorithm is independent of the number of effects and is only affected by the maximal size of the antecedents and the number of literals and actions in the domain. 
The complexity does, however, increase exponentially with $n$. 

Next, we present an analysis of the algorithm's sample complexity as well as a lower bound on the number of samples needed to learn safe and approximately complete models. 

\begin{theorem}
Let $\mathcal{D}$ be a distribution over pairs $\tuple{P,\Pi}$ where $P$ is a problem from a fixed domain $D$ and $\Pi$ is a plan solving $P$. Given 
\begin{equation*}
\small
m\geq \frac{1}{\epsilon}\left(\ln(3)|F||A|+2\ln(2)|F||A|\left(\frac{2|F|e}{n}\right)^n+\ln\frac{1}{\delta}\right)
\end{equation*}
trajectories obtained by executing $\Pi$ for $m$ independent draws from $\mathcal{D}$, \consam returns an action model $M'$ such that with probability $1-\delta$, for a new $P$ drawn from $\mathcal{D}$, the probability that there exists a plan consistent with $M'$ solving $P$ is at least $1-\epsilon$.
\end{theorem}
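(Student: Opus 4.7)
The plan is a standard PAC-learning Occam's-razor argument: bound the size of the finite hypothesis class $H$ of action models that \consam can output, argue that every such output is consistent with all $m$ training trajectories, and apply a union bound to rule out candidates whose true error exceeds $\epsilon$.

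To bound $|H|$, observe that the output is a deterministic function of the data structures $\pre(a)$ and $\possiblya(l,a)$ (with $\mustber$ contributing only a lower-order term absorbed elsewhere). For each action, the realizable $\pre(a)$ is captured by one of three choices per fluent---required positively, required negatively, or absent---giving at most $3^{|F||A|}$ configurations across all actions. For each $(l,a)$ pair, $\possiblya(l,a)$ is an arbitrary subset of the at most $(2|F|e/n)^n$ conjunctions of size $\le n$ over $L$, yielding $2^{2|F||A|(2|F|e/n)^n}$ configurations across all $|L||A|=2|F||A|$ pairs. Thus $\ln|H|\le \ln(3)|F||A|+2\ln(2)|F||A|(2|F|e/n)^n$. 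Consistency of the output $M'$ with each training trajectory follows from Theorem~\ref{lem:sam-safe} together with a routine check of the disjunctive preconditions added in Algorithm~\ref{alg:learn-conditionals}: when $l\notin s'$, Rule~\ref{rule:not-an-effect} forces every surviving $c\in\possiblya(l,a)$ to be falsified by $s$ and hence $\mustnotbea$ to hold in $s$; when $l\in s'\setminus s$, Rule~\ref{rule:no-disjunctive}, combined with the fact that a clause consistent with a complete state must be contained in it, forces $\mustbea$ to hold in $s$; and when $l\in s$ the disjunct $l$ itself holds.

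Define $\mathrm{err}(M)=\Pr_{(P,\Pi)\sim\mathcal{D}}[\Pi\text{ is not valid under }M]$. Any $M\in H$ with $\mathrm{err}(M)>\epsilon$ is consistent with $m$ independent trajectories with probability at most $(1-\epsilon)^m\le e^{-\epsilon m}$, so a union bound gives $\Pr[\mathrm{err}(M')>\epsilon]\le |H|\,e^{-\epsilon m}$; setting this expression equal to $\delta$ and solving for $m$ produces exactly the stated bound. Finally, whenever $\Pi$ is valid under $M'$, Theorem~\ref{lem:sam-safe} implies that the trajectory produced under $M'$ coincides with the one under \realm, so $\Pi$ also solves $P$ under $M'$; hence the probability that some plan consistent with $M'$ solves $P$ is at least $1-\mathrm{err}(M')\ge 1-\epsilon$. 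The main obstacle is the hypothesis-space bookkeeping, specifically justifying that $\mustber$ contributes no independent exponential term and that inapplicable precondition configurations collapse so the constant in front of $|F||A|$ is exactly $\ln 3$ rather than $\ln 4$; once this accounting is pinned down, the remainder of the argument is entirely routine.
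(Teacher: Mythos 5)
Your proposal is correct and takes essentially the same route as the paper: the same use of Theorem~\ref{lem:sam-safe} to reduce to satisfying preconditions along $\Pi$, the same count of configurations ($3^{|F||A|}$ for $\pre$ and $2^{2|F||A|(2|F|e/n)^n}$ for $\possiblya$), and the same $(1-\epsilon)^m$ union-bound yielding the stated $m$. The only difference is framing --- you argue directly that the returned $M'$ is consistent with every training triplet (via the Rule~\ref{rule:not-an-effect}/Rule~\ref{rule:no-disjunctive} case analysis on the added disjunctive clauses), whereas the paper argues the contrapositive, that any configuration prohibiting a sampled transition has a literal or clause deleted and so cannot be returned; these are equivalent given monotone deletion, and your honest flagging of the $\mustber$ bookkeeping matches a point the paper itself glosses over.
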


At a high level, the sample complexity follows since the learned preconditions of each action of a plan $\Pi$ sampled from $\mathcal{D}$ are satisfied and the action model is safe. Either at least one literal is deleted from $\pre$ or at least one clause $c$ is deleted from some $\possiblya(e,a)$ when such an action $a$ would be used by the plan $\Pi$. Thus if a specific literal or clause would prohibit the action with probability greater than $\epsilon$, that literal/clause is eliminated with high probability given a sample of the specified size.

\begin{proof}
In view of Theorem~\ref{lem:sam-safe}, it suffices to show that for a pair $\tuple{P,\Pi}$ drawn from $\mathcal{D}$, the preconditions of $\Pi$ in $M'$ are satisfied for each step of the execution of $\Pi$ in the real action model $M^*$; indeed, the states obtained by $M'$ and $M^*$ are identical, so $\Pi$ will then also solve $P$ in $M'$.

Recall that \consam passes the sets $\pre$, $\mustber$, and $\possiblya$ for each action $a$ and, in the case of $\possiblya$, for each literal $l$ to Algorithm~\ref{alg:learn-conditionals}.
A literal $l$ only appears in $\pre(a)$ for an action $a$ if $\neg l$ has never been observed in the pre-state when action $a$ was taken. Similarly, a clause $\neg c$ may appear as (a subclause of) some clause of the precondition $\pre^*$ of $a$ in $M'$ if $c$ remains in the antecedents set $\possiblya(e,a)$ of some candidate effect $e\in \mustber(a)$ for which more than one such candidate remains, or for which $e\notin \mustber(a)$ and $c$ is in $\possiblya(e,a)$. Note that if $\neg c$ is falsified in a state $s$ (prohibiting $a$ in $s$ in $M'$), $s\subseteq c$. Hence, if the execution of $\Pi$ in $M^*$ would result in $a$ being taken in $s$ resulting in $s'$, \consam would remove $c$ from $\possiblya(e,a)$ for all $e\notin s'$, and $c$ from $\possiblya(e,a)$ if $c\nsubseteq s$ and $e\in s'\setminus s$. 

We now claim that the probability that \consam obtains a set of antecedents $\possiblya(l,a)$ and set of preconditions $\pre(a)$ that prohibits the execution of $\Pi$ with probability greater than $\epsilon$ is at most $\delta$: $a$ is only prohibited by $\pre^*$ in $s$ if (1) $l\in s$ for some $\neg l\in pre(a)$; if (2) $s\subseteq c$ for some $c\in \possiblya(l,a)$ where $l\notin \mustber(a)$ and $\neg l\in s$; or, if (3) $l\in \mustber(a)$, $\neg l\in s$,  $s\subseteq c$ for some $c\in \possiblya(l,a)$, and $s\nsubseteq c'$ for some (other) $c'\in \possiblya(l,a)$. When the execution of $\Pi$ includes taking such an action $a$ in such a prohibited state $s$, in the first case we see \consam removes the falsified $\neg l$ from $\pre(a)$; for every effect $e$ of $a$ in $s$, any $c\nsubseteq s$ are removed from $\possiblya(e,a)$ so cases (2) and (3) cannot occur; and for every literal $\tilde{e}$ that is not an effect of $a$ in $s$, since $\tilde{e}\notin s'$, all $(c,\tilde{e})$ for $c\subseteq s$ are removed from $\possiblya(\tilde{e},a)$, so neither case (2) nor (3) can occur. Thus, we see that either at least one literal is deleted from $\pre$ or at least one $c$ is deleted from some $\possiblya(e,a)$ when such an $(s,a,s')$ occurs in the trajectory, so that $a$ is permitted by $\pre^*(a)$ in $s$ subsequently. Since literals are only deleted from $\pre$ and clauses are only deleted from $\possiblya$, \consam then cannot return the eliminated collection of preconditions and antecedents sets.

Quantitatively, for any collection of preconditions and antecedents sets for which such a problem and plan would be obtained from $\mathcal{D}$ with probability greater than $\epsilon$, \consam can only return the corresponding collection with probability at most $(1-\epsilon)^m$ when it is given $m$ examples drawn independently from $\mathcal{D}$. Observe that there are $3^{|F|}$ possible sets $\pre$ for each $a\in A$, and $2^{\sum_{k=0}^n2^k{|F|\choose k}}$ possible sets $\possiblya$ for each $l$ and $a$.  Thus, there are at most
\[
3^{|F||A|}2^{2|F||A|\sum_{k=0}^n2^k{|F|\choose k}}\leq e^{\ln(3)|F||A|+2\ln(2)|F||A|\left(\frac{2|F|e}{n}\right)^n}
\]
possible collections of $pre$ and $\possiblya$. Since $(1-\epsilon)^m\leq e^{-m\epsilon}$, taking a union bound over all possible collections of $\pre$ and $\possiblya$ that prohibit the execution of the associated plan with probability at least $\epsilon$, we find that for the given $m$, the total probability of \consam obtaining such a collection of preconditions and antecedents sets is at most $\delta$. Thus, with probability $1-\delta$, the action model indeed permits executing the plans associated with problems drawn from $\mathcal{D}$ with probability at least $1-\epsilon$ as needed.
\end{proof}

\consam, therefore, enjoys approximate completeness with high probability so long as the number of training trajectories is sufficiently large. The one unsatisfying aspect of our bound is that the number of trajectories is exponential in the size of the antecedents of the conditions in the conditional effects we consider. Unfortunately, we find that this is unavoidable and our bound is asymptotically optimal (for any fixed $n$) for safe action model learning for domains with conditional effects:


\begin{theorem}\label{thm:lower-bound}
Any learning algorithm that is guaranteed to return a safe action model must be given at least $m\geq\Omega(\frac{1}{\epsilon}(|F||A||(|F|/3n)^{n}|+\log\frac{1}{\delta}))$ samples to be able to guarantee that with probability at least $1-\delta$ the learned model permits a plan solving $\Pi$ drawn from $\mathcal{D}$ with probability at least $1-\epsilon$ for $0<\epsilon,\delta < 1/4$.
\end{theorem}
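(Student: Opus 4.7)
The plan is to construct an adversarial family of action models and a distribution $\mathcal{D}$ over problem-plan pairs such that any safe learner must effectively ``witness'' each possible antecedent in order to admit the corresponding test plan, and then apply a packing-style PAC lower bound. Fix a distinguished fluent $g$ that appears as a goal literal in every problem. For each action $a\in A$ and each size-$n$ conjunction $c$ of literals over $F\setminus\{g\}$, consider the candidate conditional effect $(c,\neg g)$. Using $\binom{|F|-1}{n} \geq ((|F|-1)/n)^n$ combined with the two sign choices per literal, one obtains at least $|F||A|(|F|/3n)^n$ candidate $(a,c)$ indices, with the constants absorbed into the factor of $1/3$. Take the real action model \realm to have no preconditions and no conditional effects on the relevant actions, so that in truth every plan succeeds trivially.

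Design $\mathcal{D}$ so that with probability $\Theta(\epsilon)$ a freshly sampled problem-plan pair is ``hard'': a uniformly chosen index $(a,c)$ dictates an initial state in which exactly the literals of $c$ together with $g$ hold, and the plan executes $a$ once and then relies on $g$ to reach the goal. Inspecting \LearnConditions{}, a safe learner permits $a$ in this state only if the clause $c$ has been removed from $\possiblya(\neg g,a)$; otherwise, since $\neg g\notin\mustber(a)$, the disjunct $(\neg g \vee \mustnotbea)$ inserted into $\pre^*(a)$ on line~\ref{alg:add-not-effect-restrictive} is falsified (because $g$ holds and $c$ is satisfied). By Rule~\ref{rule:not-an-effect}, eliminating $c$ from $\possiblya(\neg g,a)$ requires observing some triplet $\tuple{s,a,s'}$ with $c\subseteq s$ and $g\in s'$. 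Thus a safe learner must literally have seen $a$ applied in a $c$-satisfying pre-state for every index $(a,c)$ whose corresponding test pair it hopes to admit, and prohibits it otherwise.

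The bound then follows from a coupon-collector-style argument: if $m = o\bigl(\frac{1}{\epsilon}|F||A|(|F|/3n)^n\bigr)$, a constant fraction of indices are unwitnessed in expectation, so the failure probability under $\mathcal{D}$ exceeds $\epsilon$, contradicting the required $(1-\epsilon)$-completeness. The additive $\Omega(\log(1/\delta)/\epsilon)$ term is obtained separately by the standard two-point PAC lower bound, applied to distinguishing just one hard index from no hard index. The main obstacle is ensuring that training observations cannot be amortized across many indices at once; I would address this by designing the training distribution so that it samples the $|F|-1$ non-$g$ fluents in the pre-state of $a$ independently and roughly uniformly. Then any fixed size-$n$ conjunction $c$ is satisfied in a given training pre-state only with probability $\Theta(2^{-n})$, so each triplet yields useful evidence about only $O(1)$ indices in expectation, which delivers the full exponential factor $(|F|/3n)^n$ in the sample requirement. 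A minor technical point is to verify that a safe learner cannot cheat by outputting a model that simply ignores $g$ altogether; but any such model would fail to guarantee that the goal literal $g$ is preserved after executing $a$, violating safety with respect to \realm.
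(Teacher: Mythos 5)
Your high-level strategy (a packing/coupon-collector lower bound built from conditional effects the learner cannot rule out) is the same idea as the paper's, but your concrete construction has gaps that break the argument. First, your hard instances are not actually hard: the initial state already contains $g$ and the goal rests on $g$, so under any safe model the empty (or no-op) plan already solves the problem --- nothing forces the learned model to permit $a$. The paper avoids this by giving each action $a_i$ a private ``goal'' fluent $f_i$ that is false initially and can only be made true by $a_i$, so a nonempty goal can only be reached by permitting that specific action. Second, your argument that a safe learner must have witnessed $(a,c)$ inspects \consam's internals (\possiblya, \mustber, the restrictive disjunct added when $\neg g\notin\mustber(a)$), but the theorem quantifies over \emph{every} algorithm guaranteed to return a safe model. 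The needed step is information-theoretic: exhibit an alternative action model (the real one plus the conditional effect $(c,\neg g)$) that is consistent with the whole training set yet disagrees with \realm on the outcome of $a$ in the hard state; a learner with a safety guarantee cannot tell which model is real and therefore must prohibit $a$ there. This is exactly what the paper's case analysis with the ``forbidden'' fluent establishes.

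Third, and quantitatively fatal, both your counting and your anti-amortization fix fall short of the stated bound. The number of indices $(a,c)$ with $c$ a size-$n$ conjunction over $F\setminus\{g\}$ is $|A|\,2^n\binom{|F|-1}{n}=\Theta\bigl(|A|(2|F|/n)^n\bigr)$, which for fixed $n$ is smaller than $|F||A|(|F|/3n)^n$ by a factor of order $|F|$; that factor cannot be absorbed into the $1/3$. The paper recovers it from a separate pool of $\Theta(|F|)$ forbidden fluents, exactly one of which is false in each instance. Moreover, with fully observable states a single training triplet in which $a$ preserves $g$ simultaneously witnesses \emph{all} $\binom{|F|-1}{n}$ conjunctions satisfied by its pre-state --- a $2^{-n}$ (constant, for fixed $n$) fraction of your indices for that action, not $O(1)$ of them --- so under your near-uniform state distribution on the order of $2^n\log(\cdot)$ observations per action already cover almost every index and the $(|F|/n)^n$ factor vanishes from the resulting lower bound. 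The paper blocks this amortization structurally: antecedents are positive conjunctions of flag fluents, initial states have exactly $n$ flags true, and the hidden effect is observable only on the unique false forbidden fluent, so each trajectory can eliminate at most one index. Without analogous mechanisms your distribution does not force $\Omega\bigl(\frac{1}{\epsilon}|F||A|(|F|/3n)^n\bigr)$ samples.
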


At a high level, the hard distribution involves initial states that have all $|A|$ ``goal'' fluents set to false, all but one (uniformly random) of the $(p-|A|)/2$ ``forbidden'' fluents true, and exactly $n$ out of the  $(p-|A|)/2$ ``flag'' fluents (uniformly at random) true. With probability $4\epsilon$, the goal includes a single goal fluent, chosen uniformly at random, that should be set to true. All other goal fluents, as well as the one forbidden fluent, must be set to false. The corresponding $\Pi$ consists of a plan with a single action, where the agent takes an action corresponding to the fluent to be set true in the goal. Otherwise, there is an empty goal, where the agent takes a no-op action. For any problem with a non-empty goal that we did not observe in the training set, no safe action model can permit taking the action needed to achieve the goal. We need to observe at least a $3/4$ fraction of the possible goals for a safe action model to attain probability $1-\epsilon.$

\begin{proof}
For any $p\geq 3|A|$, consider a domain in which there is a no-op action with no effects, and for each other action $a_i\in A$ 
there is a ``goal'' fluent $f_{i}$ that is the effect of exactly one action, and this is the only effect. 
The domain includes an additional set of $(p-|A|)/2$ ``flag'' fluents, and $(p-|A|)/2$ ``forbidden'' fluents (so there are $|A|+2(p-|A|)/2=p$ fluents in total).

Now consider the following distribution $\mathcal{D}$ on problems and plans. 
The initial states have all goal fluents set to false, all but one (uniformly random) forbidden fluent true, and exactly $n$ of the flag fluents (uniformly at random) true. 
With probability $1-4\epsilon$, the goal is empty.
Otherwise, the goal includes a single goal fluent, chosen uniformly at random, that should be set to true. All other goal fluents, as well as the one forbidden fluent, must be set to false. 
The corresponding $\Pi$ always consists of a plan with a single action; for the empty goal, the agent takes the no-op action, and otherwise the agent takes the action corresponding to the $f_{i}$ goal fluent to be set true in the goal. 

For any problem with a non-empty goal that we did not observe in the training set, the action model that is obtained from the real action model by adding the forbidden fluent as a conditional effect of the corresponding goal action with the flag fluents as the condition, is consistent with the training set. 
Indeed, either the action appears with a different set of flags so that one of the flag fluents in this condition is falsified (and the corresponding effect does not occur), a different forbidden fluent is false (so the relevant forbidden fluent is already true and the effect is not observed), or else the action differs from the one we need to achieve this goal, and then the effect is identical to the real action model. Therefore, no safe action model can permit taking the action needed to achieve the goal, and all other actions would reach a state in which some incorrect goal fluent is set to true and cannot be subsequently set to false. 

Since the no-op goal only comprises $1-4\epsilon$ probability in the goal distribution, we need to observe at least a $3/4$ fraction of the possible goals for a safe action model to attain probability $1-\epsilon.$ But, there are $|A|$ goals, $(p-|A|)/3\geq p/3$ forbidden fluents, and ${(p-|A|)/3\choose n}\geq\left(\frac{p}{3n}\right)^n$ sets of flags, and in expectation, a sample of size $m$ only contains $4\epsilon m$ examples of these pairs of goals and flag settings. We, therefore, need $\Omega\left(\frac{1}{\epsilon}(|F|/3)^n|F||A|\right)$ examples; likewise, to even observe any of the nonempty goals with probability $1-\delta$, we need $\Omega\left(\frac{1}{\epsilon}\log\frac{1}{\delta}\right)$ examples, giving the claimed bound.
\end{proof}


\section{Learning Lifted Action Models}
Defining PDDL domains and problems in a \emph{lifted} manner is common.  
A lifted domain defines fluents and actions in a parameterized manner,  where every parameter has a \emph{type}. 
For example, the action \emph{(stop ?f - floor)} and the fluent \emph{(destin ?person - passenger ?floor - floor)} from the IPC Miconic domain are parameterized by objects of type \emph{floor} and \emph{person}. 
A state is a conjunction of \emph{grounded fluents}, which are pairs of the form $\tuple{l,b_l}$ where $l$ is a fluent, and $b_l$ is a function that maps parameters of $l$ to concrete objects. 
A plan is a sequence of \emph{grounded actions}, which are pairs in the form $\tuple{a,b_a}$ where $a$ is an action and $b_a$ maps action parameters to objects. 
A trajectory is an alternating sequence of states and grounded actions. 

Generally, the parameters in an action's preconditions and effects are bound to the action's parameters. 
Thus, preconditions and effects of an action in a lifted domain are \emph{parameter-bound literals}. 
A parameter-bound literal for an action $a$ is a pair $(l,b_{la})$ where $l$ is a literal and $b_{la}$ is a function that maps every parameter of $l$ to a parameter in $a$. 
Let $\bindings(a)$ be the function that returns all parameter-bound literals that can be bound to $a$. For a grounded action $a_G=\tuple{a, b_a}$ and parameter-bound literal $l\in \bindings(a)$, we define
$\grounding(a_G, l)$ to be the grounded literal resulting from assigning the objects in the parameters of $a_G$ to the parameters of $l$. 
Given a conjunction of parameter-bound literals $c$, $\grounding(a_G, c)$ returns the corresponding conjunction of grounded literals $c_G$ such that $\forall l\in c: \grounding(a_G, l)\in c_G$.
Similarly, for a pair of conjunctions of parameter-bound literals $(c,e)$ we define $\grounding(a_G,c,e)$ to be the pair $(c_G,e_G)$ that
are the corresponding conjunctions of grounded literals. 
SAM learning has already been extended to learn lifted classical planning domains~\citep{juba2021safe} without conditional effects. 
We extend \consam to support lifted domains in a similar manner, based on the following extension to the \consam inductive rules (Def.~\ref{def:basic-rules}).
\begin{definition}[Lifted \consam Inductive Rules]\label{def:lifted-rules}
For every action triplet $\tuple{s,a_G=\tuple{a,b_a}, s'}\in \mathcal{T}$,
\begin{compactenum}
    \item \,[Not a precondition] For every $l\in \bindings(a)$ s.t.\ $\grounding(a_G,l)\notin s$, $l\notin \pre(a)$ 
    \item \,[Not a result] For every $l'\in \bindings(a)$ s.t.\  
    $\grounding(a_G,l')\notin s'$, 
    $\nexists (c,e)\in\eff(a)$ where 
    $(\grounding(a_G,c)\wedge s\nvdash\bot)\wedge(l'\in e)$
    \item \,[Must be an effect] For every $l'\in \bindings(a)$, if $\grounding(a_G,l') \in s'\setminus s$ 
    then $\exists (c,e)\in \eff(a)$, 
    where $(\grounding(a_G,c) \wedge s\nvdash\bot) 
                  \wedge  (l'\in e)$        

      \item \,[Not an antecedent] For every $l'\in \bindings(a)$  
      and set of literals $c\subseteq \bindings(a)$
      if $\grounding(a_G,l') \in s'\setminus s$
      and $\grounding(a_G,c)\wedge s\vdash\bot$       
      then $\nexists(c',e)\in \eff(a)$ such that $c \subseteq c'$ and $l'\in e$
\end{compactenum}
\end{definition}
The rest of the \consam algorithm remains essentially the same, where \mustber\ and \possiblya\ may now contain parameter-bound literals.
\section{Learning Effects with Universal Quantifiers}
Some PDDL domains and planners support \emph{universal quantifiers}, which allow actions' preconditions and effects to include additional parameters that are not bound to the actions' parameters. 
More formally, \emph{Universally quantified} preconditions and effects define one or more \textit{\uqv} that may be bound to any parameter of a literal used in them. 
The result of universally quantified conditional effects \emph{must} include at least one \uqv. 
Otherwise, if only the antecedents include \uqvs, then we can interpret such antecedents as disjunctive universal preconditions.
For example, suppose we need to represent an elevator with a stopping functionality that ensures all waiting passengers get in or out once the elevator stops. 
Figure~\ref{list:universal-effects-action} presents the \textit{stop} action schema to implement this functionality. The \uqv in this example is $?p$.
\begin{figure}[ht]
\begin{center}
\begingroup
    \fontsize{8pt}{8pt}\selectfont
\begin{Verbatim}[commandchars=\\\{\}]
(\textcolor{blue}{:action} stop
\textcolor{blue}{:parameters} (?f - floor)
\textcolor{blue}{:precondition} (and (lift-at ?f))
\textcolor{blue}{:effect} 
(and (\textcolor{blue}{forall} (?p - passenger)
    (when
      (and (boarded ?p) (destin ?p ?f))
      (and (not (boarded ?p)) (served ?p))))))
\end{Verbatim}
\endgroup
\caption{Parts of the action \textit{stop} from Miconic domain that contains universally conditional effects.}
\label{list:universal-effects-action}
\end{center}
\end{figure}

We focus below on learning universal effects since they are more common, but our approach also supports learning universal preconditions.
Note that universal effects may be unconditional and occur every time the action is executed, in which case the antecedent is the trivial antecedent $\true$. 
\consam can learn these universal effects as well.
We briefly describe how \consam can be extended to support universal effects. 

In general, the number of \uqvs a universal effect can define is exponential in the arity of the domain fluents. Still, universal effects with more than two \uqv are rare. 
Thus, we will assume the number of \uqvs in a universal effect is a known fixed constant $k$.  
To support universal effects, the $\bindings(a)$ function is modified to also return parameter-bound literals that bind one or more literal parameters to \uqvs that may be used in action $a$'s effects. 
Similarly, the $\grounding(a_G,l)$ function is modified such that if $l$ is a parameter-bound literal that includes \uqvs
then $\grounding(a_G,l)$ returns a \textit{set} of grounded literals matching the grounded action's parameters combined with the \uqvs.
In addition, $\grounding(a_G,c,e)$ returns a set of matching pairs $(c_G,e_G)$ if either $c$ or $e$ include one or more \uqvs. 
We now present the changes in the inductive rules to support universally quantified variables.
\begin{definition}[\consam Inductive Rules with \uqvs]
\label{def:lifted-rules-univ}
For every action triplet $\tuple{s,a_G=\tuple{a,b_a}, s'}\in \mathcal{T}$:
\begin{compactenum}
    \item For every $l\in \bindings(a)$ such that $\exists l_G\in\grounding(a_G,l)$ where $l_G\notin s$, then $l\notin \pre(a)$\footnote{The first inductive rule enables learning universal preconditions} 

    \item For every $l'\in \bindings(a)$ such that $\exists l'_G \in \grounding(a_G,l')$ and $l'_G\notin s'$ then
    $\nexists (c,e)\in\eff(a)$ such that $\exists (c_G,e_G)\in \grounding(a_G,c,e)$ where $(c_G\wedge s\nvdash\bot) \wedge (l'_G\in e_G)$

    \item For every $l'\in \bindings(a)$ if $\exists l'_G\in \grounding(a_G,l')$ such that $l'_G \in s'\setminus s$ 
    then $\exists (c,e)\in \eff(a)$, 
    where $\exists(c_G,e_G)\in \grounding(a_G,c,e)$ such that $ c_G \wedge s\nvdash\bot \wedge l'_G\in e_G$        

    \item  For every $l'\in \bindings(a)$ and $c\subseteq \bindings(a)$
      if $\exists(l'_g,c_G)\in \grounding(a_G,c,l')$ such that $l'_G \in s'\setminus s$
      and $c_G\wedge s\vdash\bot$       
      then $\nexists(c',e)\in \eff(a)$ such that $c \subseteq c'$ and $l'\in e$
\end{compactenum}
\end{definition}

Next, we presented the \consam algorithm that supports lifted actions with \uqvs. One key assumption we make is the \emph{inductive binding assumption}, which means that 
for every transition $\tuple{s,a_G=\tuple{a,b_a}, s'}$ 
and grounded literal $l'_G\in s'$ 
there exists a single parameter-bound literal $l'$ 
that satisfies $l'_G\in g(a_G,l')$. How to apply SAM in domains without this assumption is significantly more complex, and is explained by Juba et al.~\citet{juba2021safe}.

The algorithm starts similarly to its grounded version, except that instead of initializing the preconditions with the grounded literals, it initializes the data structures with their parameter-bound counterparts that might contain \uqvs. 
The algorithm applies lifted inductive rules and then compiles the final preconditions and effects using \LearnConditions{} (Algorithm~\ref{alg:lifted-learn-conditionals}).
\LearnConditions{}'s main difference compared to its grounded version is that the function iterates over every $u\in\uqvs$, adding a universal precondition to the action if necessary. 
The pseudo-code for the algorithm is presented in Algorithm~\ref{alg:lifted-consam-algorithm}.

\begin{algorithm}[ht]
\small
\caption{\consam Algorithm -- Lifted Version}\label{alg:lifted-consam-algorithm}
\begin{algorithmic}[1]
\State \textbf{Input}: $\mathcal{T}, n$ 
\State \textbf{Output}: A safe action model.
\For{$a \in A_\lifted(\mathcal{T})$}
    \State $\pre(a) \gets L_\pb(\mathcal{T})$
    \State $\mustber(a) \gets \emptyset$ 
    \State $\possiblya(l,a) \gets \bigcup_{i=1}^n \{l_1\wedge ...\wedge l_i | \forall 1\leq j\leq i: l_j\in L_\pb(\mathcal{T}) \} \cup \{\true\}$
\EndFor
\For{$\tuple{s,a_G=\tuple{a,b_a}, s'}\in \mathcal{T}$} \label{alg:applying-rules} 
    \For{$l$ such that $\exists l_G\in\grounding(a_G,l)$ where $l_G\notin s$} 
        \State $pre(a) \gets pre(a) \setminus \{l\}$\label{alg:rule1}  \Comment{Rule \ref{rule:not-a-precondition}}
    \EndFor
    \For{$l'_G\in s'\setminus s$} \Comment{Rule \ref{rule:must-be-effect}}
        \State $l'\gets$ select from $\{l'| l'\in\bindings(a), l'_G\in g(a_G,l')\}$ \label{alg:injective-binding}
        \State $\mustber(a)$ $\gets$ $\mustber(a) \cup \{l'\}$
    \EndFor   
    \For{$l'_G \notin s'$} 
        \State $l'\gets$ select from $\{l'| l'\in\bindings(a), l'_G\in g(a_G,l')\}$
        \For{$c$ s.t $\exists (c_G,l'_G)\in \grounding(a_G,c,l')$ where $(c_G\wedge s\nvdash\bot)$}
            \State $\possiblya(l',a) \gets \possiblya(l',a)\setminus c$
        \EndFor 
    \EndFor \Comment{Rule \ref{rule:not-an-effect}}
    \For{$l'_G \in s'\setminus s$} \Comment{Rule \ref{rule:no-disjunctive}}
        \State $l'\gets$ select from $\{l'| l'\in\bindings(a), l'_G\in g(a_G,l')\}$
        \For{$c$ s.t $\exists (c_G,l'_G)\in \grounding(a_G,c,l')$ where $(c_G\wedge s\vdash\bot)$}
            \State $\possiblya(l',a) \gets \possiblya(l',a)\setminus c$
        \EndFor 
    \EndFor        
\EndFor
\State \Return  \textbf{\LearnConditions}$(\pre,\mustber,\possiblya)$
\end{algorithmic}
\end{algorithm}

\begin{algorithm}[ht]
\small
\caption{\LearnConditions{} -- Lifted Version}\label{alg:lifted-learn-conditionals}
\begin{algorithmic}[1]
\State \textbf{Input}: $\pre,\mustber,\possiblya$ 
\State \textbf{Output}: $\pre^*$ and $\eff^*$ for all actions.
\For{$a \in A_\lifted(\mathcal{T})$}
    \State $\pre^*(a) \gets \emptyset$
    \For{$u\in \uqvs(a)$}
        \State $\eff(a) \gets \emptyset$; ~~ $\pre_u^*(a) \gets \bigwedge_{l\in \pre(a)}l$
        \For{$l\in L_\pb(\mathcal{T})\setminus\pre(a)$ where $\possiblya(l,a)\neq\emptyset$} 
            \State $PA\gets \{c\in \possiblya(l,a)|(\pre(a)\cap c) = \emptyset\}$
            \State $\mustnotbea \gets \bigwedge_{c\in PA}\neg c$
            \State $\mustbea \gets \bigwedge_{c\in PA} c$
            \State Minimize $\mustbea$ and $\mustnotbea$ using unit propagation.
            \If{$l \in \mustber(a)$}
                \State Add to $\eff(a)$: $(\mustbea, l)$ 
                \If{$PA$ is not a single clause}
                    \State $\pre_u^*(a) \gets \pre_u^*(a) \wedge (l\vee \mustnotbea \vee \mustbea)$ 
                \EndIf          
            \Else
                \State $\pre_u^*(a) \gets \pre_u^*(a) \wedge (l\vee \mustnotbea)$ 
            \EndIf
        \EndFor
        \State $\pre^*(a) \gets \pre^*(a) \wedge \pre_u^*(a)$
    \EndFor
\EndFor
\State \Return  $\tuple{\pre^*, \eff}$
\end{algorithmic}
\end{algorithm}

\section{Experimental Results}
We implemented \consam{} and conducted experiments on six planning domains, including conditional effects. 
Specifically, we used the CityCar, Nurikabe, and Maintenance domains from the International Planning Competition (IPC) 2014~\citep{vallati20152014}; the Briefcase and Miconic are from AIPS-2000~\citep{bacchus2001aips}, and Satellite, which is an ADL version of the classical IPC~\citep{long20033rd} domain~\footnote{All the domains are available in https://github.com/AI-Planning/classical-domains}. There are more domains containing conditional effects available. Table~\ref{tab:my-table} presents the available domains and their properties. The experimented domains are those in which all the assumptions hold. 
We summarize the properties we used to classify the domains: 
(1) Disjunctive antecedents - i.e., whether a conditional effect can appear in more than one "when'' clause. 
(2) Existential antecedents / preconditions - whether the domain's actions contain existential quantifiers in the preconditions or the effects. 
(3) Object equality in antecedents / preconditions - whether the antecedents contained statements in which two parameters must be equal.
(4) Type hierarchy - whether the domain has nested types. 
(5) Implications - whether the actions contain implications in their preconditions. 
(6) Planners can solve with the real model - whether the planners used in our experiments were able to solve the datasets' problems. 

\consam does not support object equality checks in the antecedents thus, domains containing these conditions were not used in the experiments. 
Finally, type hierarchy is also not supported by \consam since it can lead to ambiguities in predicate matching.


\begin{table*}[tb]
\centering
\resizebox{\textwidth}{!}{%
\begin{tabular}{|c|c|c|c|c|c|c|c|}
\hline
\textbf{Domain} & \textbf{\begin{tabular}[c]{@{}c@{}}Used \\ in experiments\end{tabular}} & \textbf{\begin{tabular}[c]{@{}c@{}}Planners can solve \\ with the real model\end{tabular}} & \textbf{\begin{tabular}[c]{@{}c@{}}Disjunctive \\ antecedents \\ (Assumption 4)\end{tabular}} & \textbf{\begin{tabular}[c]{@{}c@{}}Existential \\ preconditions / effects\end{tabular}} & \textbf{\begin{tabular}[c]{@{}c@{}}Object equality \\ in antecedents / \\ precondition\end{tabular}} & \textbf{\begin{tabular}[c]{@{}c@{}}Type \\ hierarchy\end{tabular}} & \textbf{Implications} \\ \hline
Airport-adl & X & - & X & V & X & X & V \\ \hline
Briefcase & V & V & X & X & X & X & X \\ \hline
Caldera & X & - & X & X & V & V & X \\ \hline
Cavediving & X & X & X & X & X & X & X \\ \hline
Citycar & V & V & X & X & X & X & X \\ \hline
Elevators & V & V & X & X & X & X & X \\ \hline
maintenance & V & V & X & X & X & X & X \\ \hline
Nurikabe & V& V & X & X & X & X & X \\ \hline
Schedule & X & - & X & X & V & X & X \\ \hline
Satellite & V & V & X & X & X & X & X \\ \hline
Settlers & X & - & X & X & V & V & X \\ \hline
Spider & X & -& X & X & X & V & X \\ \hline
\end{tabular}%
}
\caption{Detailed information about the domains containing conditional effects available in the domains repository - https://github.com/AI-Planning/classical-domains}.
\label{tab:my-table}
\end{table*}

Table~\ref{tab:domain-stats} presents additional information about the domains that were used 
 in out experiments.
The column `Domain` represents the domain's name that was experimented on, and the columns `$|A|$` and `$|F|$` present the number of lifted actions and fluents in the domain respectively. The column `\# U.E.` presents the number of universally conditional effects present in the domains (The satellite domain has conditional effects that do not contain \uqvs) and the column `$n$` is the maximal number of antecedents for the conditional effects in the domains. The column $|\mathcal{T}|$ represents the size of the trajectories dataset of the domain.
The column $|t|$ is the average number of action triplets in a trajectory (the standard deviation is displayed in brackets).

We generated our problems dataset for each domain using a PDDL problem generator~\citep{seipp-et-al-zenodo2022}.\footnote{Action costs were ignored in all domains we experimented on.} 
Using the problem generator, we created a dataset of 100 problems that were used to create the trajectories. 
To solve the generated problems and create the input trajectories, we used two well-known classical planners that support ADL, Fast-Downward (FD)~\citep{helmert2006fast} using FF heuristic and context-enhanced additive heuristic, and Fast-Forward (FF)~\cite{hoffmann2001ff} with a Greedy BFS configuration.
We restricted the solvers to solve the problems in up to 60 seconds.
For the Nurikabe domain, only 52 problems were solved with our planners, resulting in a smaller dataset.

We split our dataset into train and test sets, trained \consam on the trajectories in the train set, and used the generated action models to solve the test set problems. 
We used VAL~\citep{howey2004val} to validate the generated plans' correctness.
We followed a 5-fold cross-validation methodology by repeating each experiment 5 times, sampling different trajectories for learning and testing. All the presented results are averaged over the five folds. 
The experiments were run on a Linux machine with 8 cores and 16 GB of RAM.

\begin{table}[tb]
\centering
\resizebox{\columnwidth}{!}{%
\begin{tabular}{@{}lcccccl}
\toprule
\textbf{Domain} & \multicolumn{1}{l}{\textbf{$|A|$}} & \multicolumn{1}{l}{\textbf{$|F|$}}  & \multicolumn{1}{l}{\textbf{\# U.E.}} & \multicolumn{1}{l}{\textbf{$n$}} & $|\mathcal{T}|$ & $|t|$\\ \midrule
Satellite & 5 & 8 & 0 & 1 & 100 & 36.2 (6.1)\\
Maintenance & 1 & 3 & 1 & 1 & 100 & 5.6 (1.3)\\
Miconic & 3 & 6 & 2 & 2 & 100 & 46.7 (4.9)\\  
Citycar & 7 & 10 & 1 & 1 & 100 & 19.7 (4.0)\\
Briefcase & 3 & 3 & 1 & 1 & 100 & 76.6 (15.6) \\
Nurikabe& 4& 12& 2& 3& 52& 78.2 (5.17)\\\bottomrule
\end{tabular}%
}
\caption{Statistics regarding the experimented domains.}
\label{tab:domain-stats}
\vspace{-0.6cm}
\end{table}

\subsection{Evaluation Metrics}
We evaluated our algorithm using two metrics: the percentage of the test set problems solved using \consam's learned model, and the correctness of the learned model using precision and recall measures. 
A test set problem is regarded as solved if one of the planners we used (FD and FF) was able to solve it with the learned action model. 
Measuring the \textit{syntactic} precision and recall of the learned model, i.e., measuring the textual difference between the real and learned domains, may not represent the usefulness of the learned domain in solving problems. 
Instead, we measure the \textit{semantic} precision and recall of the learned model's preconditions and effects, as follows. 
For each state in our trajectories, we try to apply the actions of the learned and real action models on the state. 
We measure the precision and recall based on which action is applicable in the tested states. Formally, the semantic precision and recall of the preconditions are:
\begin{align*}\small
    P^{sem}_{\pre}(a)= \frac{|app_{\realm}(a)\cap app_M(a)|}{|app_M(a)|}  \\
    R^{sem}_{\pre}(a)= \frac{|app_{\realm}(a)\cap app_M(a)|}{|app_{\realm}(a)|}
\end{align*}
Where $app_M(a)$ denotes the states in a set of trajectories where $a$ is applicable according to the action model $M$. 
Finally, average the results over all actions of the domain.

Since \consam learns a safe action model, the semantic precision of the preconditions is always one. 
Furthermore, the \consam's safety property indicates that whenever an action is applicable according to \consam its effects are identical to the real domain's effects. 
Thus the precision and recall of the effects is always one as well. 
Thus, our evaluation below only presents semantic recall of the actions' preconditions.

\begin{figure*}[ht]
    \centering
    \includegraphics[width=\textwidth]{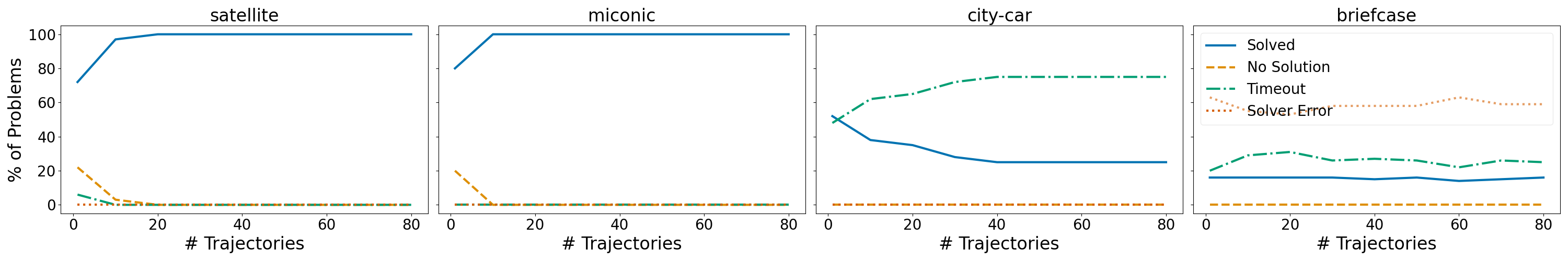}
    \caption{Solving statistics of Satellite, CityCar, Briefcase and Miconic domains.}
    \label{fig:combined-plots}
\end{figure*}

\subsection{Results}
Table~\ref{tab:results} displays the experimental results with the maximal number of trajectories given as input. 
The column $\%S$ represents the percent of the problems that were solved by the planners, 
$\%TO$ represents the percentage of problems in which the planner had timed out (i.e., reached our 60-second time limit), 
and $\%NS$ represents the percent of problems that were declared unsolvable with the learned domain. 
This is caused when the learned domain is too restrictive and thus some problems cannot be solved with it.
The column $\%ERROR$ represents the percentage of problems in which the solver encountered an error while solving the test set problems. 
Such errors were caused when the planner was killed due to extensive resource consumption.
The column \textit{Planner} represents the planner that had the best performance and their results are being presented.
Finally, the column $R_{pre}$ denotes the preconditions' semantic recall.

\begin{table}[tb]
\centering
\resizebox{\columnwidth}{!}{%
\begin{tabular}{@{}lccccll}
\toprule
\textbf{Domain} & \multicolumn{1}{l}{\textbf{$\%S$}} & \multicolumn{1}{l}{\textbf{$\%TO$}}  & \multicolumn{1}{l}{\textbf{$\%NS$}} & \multicolumn{1}{l}{\textbf{$\%ERROR$}} &  Planner&$R_{sem}$\\ \midrule
Satellite & 100& 0& 0& 0 & FD / FF&0.99\\
Maintenance & 100& 0& 0& 0 & FF&1.00\\
Miconic & 100& 0& 0& 0& FF / FD& 1.00\\  
Citycar & 25& 75& 0& 0 & FD& 0.99\\
Briefcase & 16 & 25& 0& 59 & FF & 1.00\\
Nurikabe& -& -& -& -& -&-\\\bottomrule
\end{tabular}%
}
\caption{Experimental results for \consam.}
\label{tab:results}
\vspace{-0.6cm}
\end{table}

For the Satellite, Maintenance, and Miconic domains, all the test set problems were solved perfectly using the domain learned with \consam. 
The results for CityCar and Briefcase are significantly worse, where the percent of problems that were solved are 25\% and 19\%, respectively. 
A possible explanation for these results is that the learned domain returned by \consam{} may contain complex universal preconditions. These preconditions affect the solvers in their ability to solve the test set problems. 
Indeed, every problem that was not solved in CityCar domain was not solved due to our timeout restrictions. 
Similarly, in the Briefcase domain, the planning process terminated on many occasions because it consumed too many resources (expressed in the $\%ERROR$ column). 
However, the calculated preconditions' semantic recall for both domains is 0.99. 
That suggests that while the learned domains may appear to be more complex than their original counterpart, they are nearly semantically identical. 
For the Nurikabe domain, \consam{} could not solve any test problem with the learned domain. This may be because this domain has the largest antecedents ($n=3$). 

Figure~\ref{fig:combined-plots} presents the solving statistics as a function of the number of trajectories used to train \consam for the domains Satellite, Miconic, CityCar, and Briefcase. 
We do not present the Maintenance domain results graphically since all the test set problems were solved perfectly after a single trajectory. 
For the Satellite and Miconic domains, the percentage of the solved problems increases monotonically with the number of examples. On the other hand, in the CityCar domain, we observe a decrease in the number of solved problems as the number of trajectories increases. 
This is because with single trajectory, the domain \consam learns does not contain the action \textit{destroy\_road}, which has universal effects. 
Without this action the domain is less complex and the planners could solve more test set problems. 
Once the action is learned, the domain becomes much more complex and thus the planners timeout more often. 
Finally, the poor results observed in the Briefcase domain were due to insufficient resources with 59\% of the test set problems not being solved since the planning process was killed due to high resource consumption. 

\section{Supporting Disjunctive Antecedents}
We focused on conditional effects where the result can appear only once in each action (Assumption~\ref{ref:assumption2}). 
There are cases where such an assumption does not hold. For example, in our flu treatment action imagine that now the allergic reaction can appear if the patient has a rare blood type or if they are sleep deprived. In this case, our action would look as follows:
\begin{figure}[ht]
\begin{center}
\begingroup
    \fontsize{8pt}{8pt}\selectfont
\begin{Verbatim}[commandchars=\\\{\}]
(\textcolor{blue}{:action} treat-flu-symptoms-X
\textcolor{blue}{:parameters} (?p - patient ?b_type - bloodType)
\textcolor{blue}{:precondition} (and (has-flu ?p))
\textcolor{blue}{:effect} 
(and (when (is-rare-blood-type ?b_type) 
           (allergic-reaction ?p))
     (when (sleep-deprived ?p) 
           (allergic-reaction ?p))  
     (and (not (has-flu ?p)))))
\end{Verbatim}
\endgroup
\vspace{-0.3cm}
\caption{An action representing a flu medicine with a conditional effect.}
\vspace{-0.5cm}
\label{list:mission-critical-domain-with-disjunction}
\end{center}
\end{figure}

In Figure~\ref{list:mission-critical-domain-with-disjunction} we present the action \textit{treat-flu-symptoms-X} that contains disjunctive antecedents. Supporting disjunctive antecedents requires altering the first assumption to address only the actions' preconditions and completely removing the second assumption.
Removing the second assumption affects \consam's fourth inductive rule since it no longer holds that if a conjunction of literals does not hold in a state, it cannot be an antecedent of the observed result. That is since now conditional effects might be disjunctive. 

Supporting the new capability requires a minor change to the \consam algorithm. We change the initialization process of \possiblya{} to have every possible CNF clause with up to $n$ antecedents, i.e., now \possiblya{} includes CNFs and not just conjunctions of literals. The rest of the algorithm is not affected. We note that this change highly increases the algorithm's complexity since now it has to eliminate every possible CNF expression before it can determine the correct set of antecedents.
Note that the available benchmark domains do not contain disjunctive antecedents for conditional effects. 
Furthermore, due to its prohibitive complexity, we decided not to support disjunctive antecedents in \consam and leave this functionality for future work.


\section{Related Work}
Several prior works learn action models from trajectories.
The Action-Relation Modelling System (ARMS)~\citep{yang2007learning} algorithm learns a PDDL description of action models by extracting a set of weighted constraints from the input plan examples. 
The Simultaneous Learning and Filtering (SLAF)~\citep{amir2008learning} algorithm is a different algorithm for learning action models designed for partially observable deterministic domains. 
The Learning Object-Centred Models (LOCM, LOCM2)~\citep{cresswell2013acquiring,cresswell2011generalised} is another action model learning algorithm that analyzes plan sequences, where each action appears as an action name and arguments in the form of a vector of object names.
FAMA~\cite{aineto2019learning} is a state-of-the-art algorithm that learns action models with minimal state and action observability.
FAMA can learn from gapped action sequences of actions, and in the extreme, FAMA can even learn when only given the initial and the final states as input.

The algorithms presented above learn action models that do not guarantee that the actions learned are applicable according to the agent's actual action model definition. 
Contrary to these algorithms, the \sam family of algorithms is designed to learn action models in a setting where execution failures must be avoided~\citep{stern2017efficientAndSafe,juba2021safe,juba2022learning}. To this end, \sam generates a conservative action model. Planning with such an action model produces \textit{sound} plans but may fail to find a plan even if such exists (i.e., it is \textit{incomplete}). 

To the best of our knowledge, there is no work focusing on learning safe action models with conditional effects. 
~\citet{oates1996learning} created an algorithm that can learn planning operators for STRIPS~\citep{fikes1971strips} by interacting with the environments and performing random actions, and using search techniques to learn the context-dependent operators. This approach uses random walks which are costly in case the agent cannot recover from failures. Furthermore, the resulting action model generated is grounded while our approach learns a lifted PDDL domain.
~\citet{zhuo2010learning} focused on learning action models with quantifiers and implications, and proved that their algorithm could learn simple conditional effects with only one antecedent. They aimed to \textit{reduce} the domain compilation time for domain experts. Thus, the domains their algorithm outputs may be incomplete or even wrong. This means that their algorithm does not work in mission-critical settings.

\section{Conclusions and Future Work}
In this work, we presented \consam, an algorithm that can learn action models for domains that include conditional and universal effects. 
We showed that \consam learns a safe action model w.r.t. the real unknown action model and runs in reasonable time. Moreover, we presented tight sample complexity results, showing that \consam{} is, in a sense, asymptotically optimal. 
Our experimental results show that using a small number of trajectories, \consam learns an action model that solves the test set problems. 
In future works, we aim to explore methods to improve the algorithm's scalability and support domains with more expressive conditional effects that might contain unbounded disjunctive antecedents or even include numeric conditions and effects.


\bibliography{camera_ready}


\end{document}